\DeclareMathOperator*{\argmin}{arg\!\min}
\DeclareMathOperator{\E}{\mathbb{E}}
\newcolumntype{L}[1]{>{\raggedright\arraybackslash}p{#1}}
\newcolumntype{C}[1]{>{\centering\arraybackslash}p{#1}}
\newcolumntype{R}[1]{>{\raggedleft\arraybackslash}p{#1}}
\newcommand{\Id}{\textnormal{Id}}
\newcommand{\disc}{\textnormal{disc}}
\newtheorem{definition}{Definition}
\newtheorem{theorem}{Theorem}
\ificcvfinal\pagestyle{empty}\fi
\begin{document}

\title{Unsupervised Creation of Parameterized Avatars}
\author[1,2]{Lior Wolf}
\author[1]{Yaniv Taigman}
\author[1]{Adam Polyak}
\affil[1]{Facebook AI Research} 
\affil[2]{School of Computer Science, 
Tel Aviv University}

\tikzstyle{b} = [rectangle, draw, fill=blue!20, node distance=3cm, text width=6em, text centered, rounded corners, minimum height=4em, thick]
\tikzstyle{c} = [rectangle, draw, inner sep=0.5cm, dashed]
\tikzstyle{l} = [draw, -latex',thick]

\maketitle

\begin{abstract}
We study the problem of mapping an input image to a tied pair consisting of a vector of parameters and an image that is created using a graphical engine from the vector of parameters. The mapping's objective is to have the  output image as similar as possible to the input image. During training, no supervision is given in the form of matching inputs and outputs.

This learning problem extends two literature problems: unsupervised domain adaptation and cross domain transfer. We define a generalization bound that is based on discrepancy, and employ a GAN to implement a network solution that corresponds to this bound. Experimentally, our method is shown to solve the problem of automatically creating  avatars.
\end{abstract}
\section{Introduction}

The artist Hanoch Piven creates caricatures by arranging household items and scrap material in a frame and photographing the result, see Fig.~\ref{fig:piven}(a). How can a computer create such images? Given a training set consisting of Piven's images, Generative Adversarial Networks (GANs) can be used to create images that are as indistinguishable as possible from the training set. However, common sense tells us that for any reasonably sized training set, without knowledge about the physical world, the generated images would be easily recognized by humans as being synthetic. 
\begin{figure}[t]
\centering
\begin{tabular}{cc}
 \includegraphics[trim=0 0 0 0, clip, height=.499299\linewidth]
{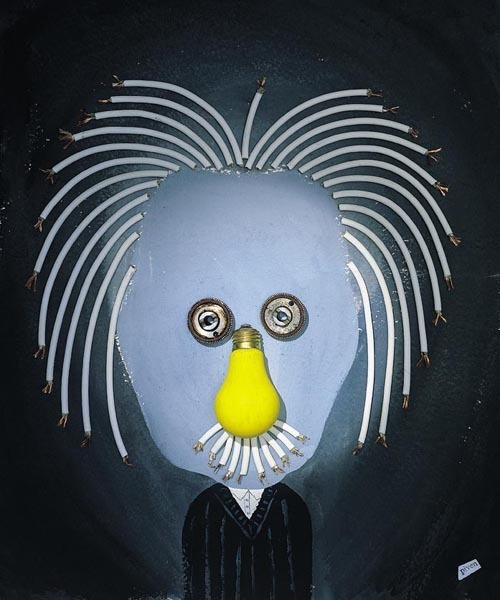}& 
 \includegraphics[trim=0 0 0 0, clip, height=.499299\linewidth]{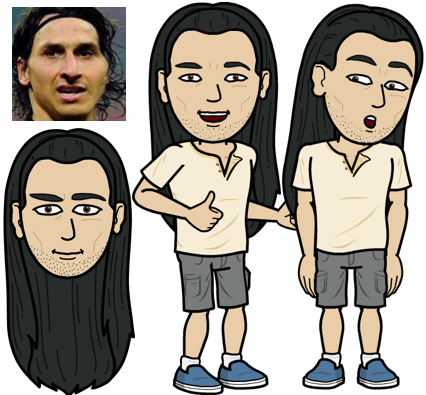}\\
 (a)& (b)\\
 \multicolumn{2}{c}{\includegraphics[trim=0 0 0 0, clip, width=.9499299\linewidth]{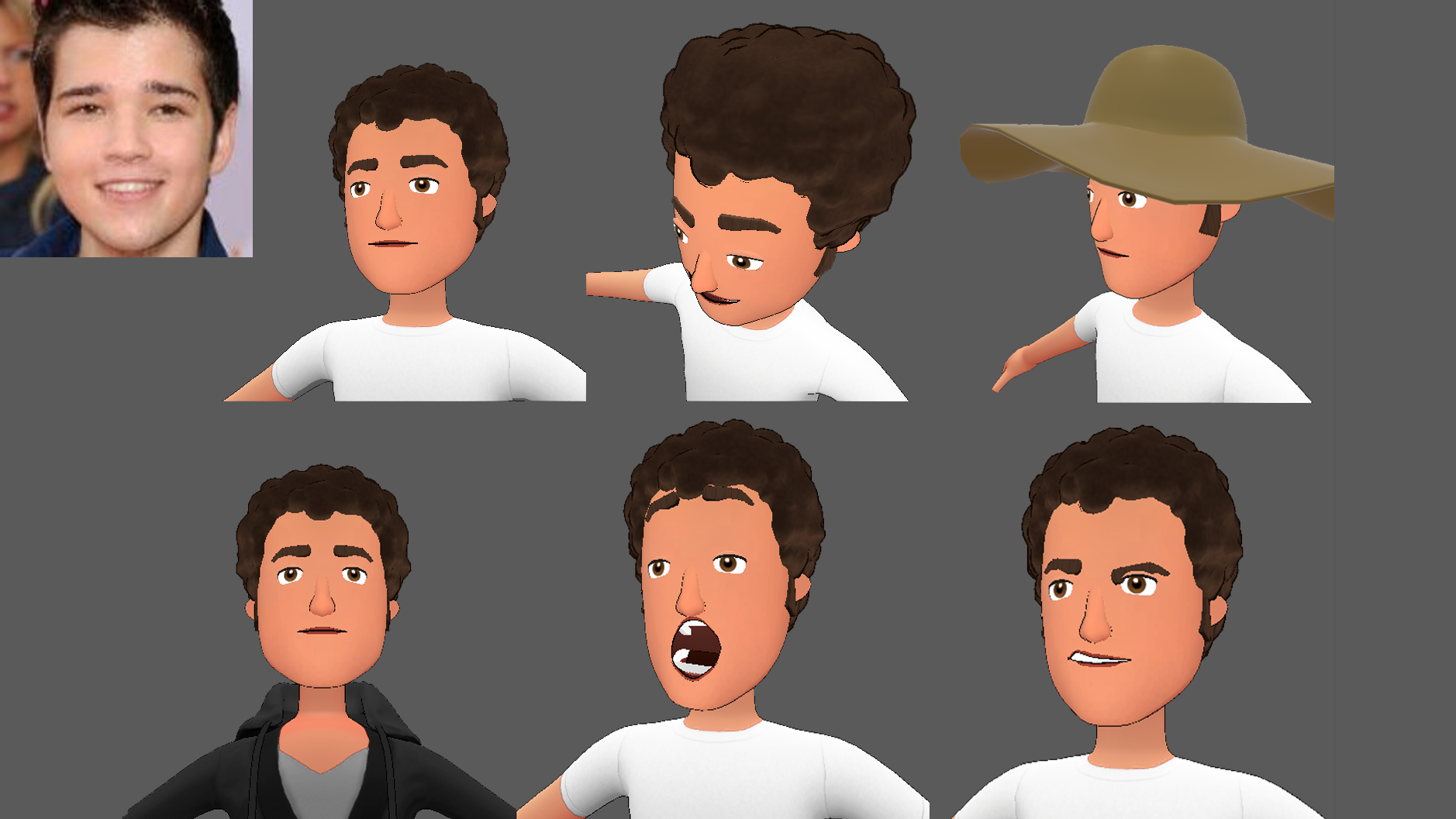}}\\
 \multicolumn{2}{c}{(c)}
 \end{tabular}
\caption{\label{fig:piven} (a) A caricature by Hanoch Piven. (b) From the image on the top left, our method computes the parameters of the face caricature below it, which can be rendered at multiple views and with varying expressions by the computer graphics engine. (c) Similarly, for 3D VR avatars.}
\end{figure}
As a second motivating example, consider the problem of generating computer avatars based on the user's appearance. In order to allow the avatars to be easily manipulated, each avatar is represented by a set of ``switches'' (parameters) that select, for example, the shape of the nose, the color of the eyes and the style of hair, all from a predefined set of options created by artists. Similar to the first example, the visual appearance of the avatar adheres to a set of constraints. Once the set of parameters is set, the avatar can be rendered in many variations  (Fig.~\ref{fig:piven}(b)).

The goal of this work is to learn to map an input image to two tied outputs: a vector in some parameter space and the image generated by this vector. While it is sufficient to recover just the vector of parameters and then generate the image, a non-intuitive result of our work is that it is preferable to recover the analog image first. In any case, the mapping between the input image and either of the outputs should be learned in an unsupervised way due to the difficulty of obtaining supervised samples that map input images to parameterized representations. In avatar creation, it is time consuming for humans to select the parameters that represent a user, even after considerable training. The selected parameters are also not guaranteed to be the optimal depiction of that user. Therefore, using unsupervised methods is both more practical and holds the potential to lead to more accurate results. 

In addition, humans can learn to create parameterized analogies without using matching samples.  Understanding possible computational processes is, therefore, an objective of AI, and is the research question addressed. Our contributions are therefore as follows: (i) we present a highly applicable and, as far as we know, completely unexplored vision problem; (ii)  the new problem is placed in the mathematical context of other domain shift problems; (iii) a generalization bound for the new problem is presented; (iv) an algorithm that matches the terms of the generalization bound is introduced; (v) the qualitative and quantitative success of the method further validates the non-intuitive path we take and (vi) the new method is shown to solve the parameterized avatar creation problem.

\subsection{Background}
\label{sec:relatedwork}

\noindent{\bf Generative Adversarial Networks} GAN~\cite{gan} methods train a generator network $G$ that synthesizes samples from a target distribution, given noise vectors, by jointly training a second network $d$. The specific generative architecture we employ is based on the architecture of~\cite{dcgan}. Since the image we create is based on an input and not on random noise, our method is related to Conditional GANs, which employ GANs in order to generate samples from a specific class~\cite{mirza2014conditional}, based on a textual description~\cite{RAYLLS16}, or to invert mid-level network activations~\cite{deepsim}. The CoGAN method~\cite{cogan}, like our method, generates a pair of tied outputs. However, this method generates the two based on a random vector and not on an input image. More importantly, the two outputs are assumed to be similar and their generators (and GAN discriminators) share many of the layers. In our case, the two outputs are related in a different way: a vector of parameters and the resulting image. The solutions are also vastly different. 

A recent work, which studied the learning of 3D structure from images in an unsupervised manner, shares some of computational characteristics with our problem~\cite{NIPS2016_6600}. The most similar application to ours, involves a parametrization of a 3D computer graphics object with 162 vertices, each moving along a line, a black-box camera projecting from 3D to 2D and a set of 2D images without the corresponding 3D configuration. The system then learns to map 2D images to the set of vertices. This setting shares with us the existence of a fixed mapping from the vector of parameters to the image. In our case, this mapping is given as a neural network that will be termed $e$, in their case, it is given as a black box, which, as discussed in Sec.~\ref{sec:conclusions} is a solvable challenge. A more significant difference is that in their case, the images generated by the fixed mapping are in the same domain as the input, while in our case it is from a different domain. The method employed in~\cite{NIPS2016_6600} completely differs from ours and is based on sequential generative models~\cite{draw}.

\noindent{\bf Distances between distributions}
In unsupervised learning, where one cannot match between an input sample and its output, many methods rely on measuring distances between distributions. Specifically, GANs were recently shown~\cite{Ganin:2016:DTN:2946645.2946704} to implement the theoretical notion of discrepancies. 
\begin{definition}[Discrepancy distance]\label{def:discrepancy}
Let $\mathcal{C}$ be a class of functions from $A$ to $B$ and let $\ell: B\times B  \rightarrow \mathbb{R}_+$ be a loss function over $B$. The discrepancy distance $\disc_{\mathcal{C}}$ between two distributions $D_1$ and $D_2$ over $A$ is defined as 
$\disc_{\mathcal{C}}(D_1,D_2) = \sup_{c_1,c_2 \in \mathcal{C}}\Big\vert R_{D_1}[c_1,c_2]-R_{D_2}[c_1,c_2]\Big\vert$, where $R_{D}[c_1,c_2] = \mathbb{E}_{x\sim D}\left[\ell(c_1(x),c_2(x))\right]$.
\end{definition}

\noindent{\bf Image synthesis with CNNs} The supervised network of~\cite{DBLP:conf/cvpr/DosovitskiySB15} receives as input a one-hot encoding of the desired model as well as view parameters and a 3D transformation and generates the desired view of a 3D object. 

DC-IGN~\cite{NIPS2015_5851} performs a similar task with less direct supervision. 
The training set of this method is stratified but not necessarily fully labeled and is used to disentangle the image representation in an encoder-decoder framework. Pix2pix~\cite{pix2pix} maps an image to another domain. This methods is fully supervised and requires pairs of matching samples from the two domains. 

\noindent{\bf Style transfer} In these methods~\cite{styletransfer,ulyanov16texture,Johnson2016Perceptual}, new images are synthesized by minimizing the content loss with respect to one input sample and the style loss with respect to one or more input samples. The content loss is typically the encoding of the image by a network training for an image categorization task, similar to our work. The style loss compares the statistics of the activations in various layers of the neural network. We do not employ style losses in our method and more significantly, the problem that we solve differs. This is not only because style transfer methods cannot capture semantics~\cite{02200}, but also because the image we generate has to adhere to specific constraints. Similarly, the work that has been done to automatically generate sketches from images, e.g.,~\cite{wang2013transductive,Zhang2016}, does not apply to our problem since it does not produce a parameter vector in a semantic configuration space. The literature of face sketches also typically trains in a supervised manner that requires correspondences between sketches and photographs.

\section{Problem Formulation}
\label{ref:problemformulation}
Problems involving domain shift receive an increasing amount of attention, as the field of machine learning moves its focus away from the vanilla supervised learning scenarios to new combinations of supervised, unsupervised and transfer learning. In this section, we formulate the new computational problem that we pose ``Tied Output Synthesis'' (TOS) and put it within a theoretical context. In the next section, we redefine the problem as a concrete deep learning problem. In order to maximize clarity, the two sections are kept as independent as possible.

\subsection{Related Problems}
\begin{figure*}[t]
\centering
\begin{tabular}{ccc}
\begin{tabular}{|c|cc|}
\hline
      & Input $\mathcal X$ & Output $\mathcal Y$\\
      \hline
    $1^{st}$  &   \begin{small}$\{x_i\sim D_T\}$\end{small}& \\
    $2^{nd}$  &   \begin{small}$\{x_j\sim D_S\}$\end{small} & \begin{small}$\{y_S(x_j)\}$\end{small} \\ 

\hline
  \end{tabular}
&
  \begin{tabular}{|c|cc|}
\hline
       & Input $\mathcal X$ & Output $\mathcal Y$\\
      \hline
    $1^{st}$  &   \begin{small}$\{x_i\sim D_1\}$ \end{small}&  \\ 
    $2^{nd}$  &   & \begin{small}$\{y(x_j)| x_j\sim D_2\}$\end{small}\\
    \hline
  \end{tabular}
 &   \begin{tabular}{|c|ccc|}
\hline
       & Input $\mathcal X$& Out. $\mathcal Y_1$ & Out. $\mathcal Y_2$\\
      \hline
    $1^{st}$  &   \begin{small}$\{x_i\sim D_1\}$ \end{small}& & \\ 
    $2^{nd}$  & & $e(c_j)$  & \begin{small}$\{c_j\sim D_2\}$\end{small}\\
    \hline
  \end{tabular}
  \\
  (a) & (b) & (c)

\end{tabular}
\caption{\label{fig:illustration} The domain shift configurations discussed Sec.~\ref{ref:problemformulation}. (a) The unsupervised domain adaptation problem. The algorithm minimizes the risk in a target domain using training samples $\{(x_i\sim D_S,y_S(x_i))\}^{m}_{i=1}$ and $\{x_i\sim D_T\}^{n}_{i=1}$. (b) The unsupervised domain transfer problem. In this case, the algorithm learns a function $G$ and is being tested on $D_1$. The algorithm is aided with two datasets: $\{x_i \sim D_1\}^{m}_{i=1}$ and $\{y(x_j) \sim D^y_2\}^{n}_{j=1}$. For example, in the facial emoji application $D_1$ is the distribution of facial photos and $D_2$ is the (unseen) distribution of faces from which the observed emoji were generated. (c) The tied output synthesis problem, in which we are give a set of samples from one input domain $\{x_i\sim D_1\}$, and matching samples from two tied output domains: $\{(e(c_j),c_j) | c_j\sim D_2\}$.}
\end{figure*}

In the {\bf unsupervised domain adaptation} problem~\cite{Crammer:2008:LMS:1390681.1442790,DBLP:conf/colt/MansourMR09,DBLP:journals/ml/Ben-DavidBCKPV10}, the algorithm trains a hypothesis on a source domain and the hypothesis is tested on a different target domain. The algorithm is aided with a labeled dataset of the source domain and an unlabeled dataset of the target domain. The conventional approach to dealing with this problem is to learn a feature map that (i) enables accurate  classification in the source domain and (ii) captures meaningful invariant relationships between the source and target domains. 

Let $\mathcal{X}$ be the input space and $\mathcal{Y}$ be the output space (the mathematical notation is also conveniently tabulated in the appendix). The source domain is a distribution $D_S$ over $\mathcal{X}$ along with a function $y_S:\mathcal{X} \rightarrow \mathcal{Y}$. Similarly, the target domain is specified by $(D_T,y_T)$. Given some loss function  $\ell: \mathcal{Y}\times\mathcal{Y} \rightarrow \mathbb{R}_+$ The goal is to fit a hypothesis $h$ from some hypothesis space $\mathcal{H}$, which minimizes the {\em Target Generalization Risk}, $R_{D_T}[h,y_T]$. Where a {\em Generalization Risk} is defined as 
\begin{small}
$R_{D}[h_1,h_2]=\mathbb{E}_{x\sim D}\left[\ell(h_1(x),h_2(x))\right]$.
\end{small}
The distributions $D_S$, $D_T$ and the target function $y_T:\mathcal{X} \rightarrow \mathcal{Y}$ are unknown to the learning algorithm. 
Instead, the learning algorithm relies on a training set of labeled samples $\{(x,y_S(x))\}$, where $x$ is sampled from $D_S$ as well as on an unlabeled training set of samples $x\sim D_T$, see Fig.~\ref{fig:illustration}(a).

In the {\bf cross domain transfer} problem, the task is to learn a function that maps samples from the input domain $\cal X$ to the output domain $\cal Y$. It was recently presented in~\cite{02200}, where a GAN based solution was able to convincingly transform face images into caricatures from a specific domain. 

The training data available to the learning algorithm in the cross domain transfer problem is illustrated in Fig.~\ref{fig:illustration}(b).  
The problem consists of two distributions, $D_1$ and $D_2$, and a target function, $y$. The algorithm has access to the following two unsupervised datasets:
$\{x_i {\sim} D_1\}^{m}_{i=1}$  and 
$\{y(x_j)| x_j {\sim} D_2\}^n_{j=1}$.  
The goal is to fit a function $h=g\circ f\in \mathcal{H}$ that optimizes 
$\inf_{h\in \mathcal{H}} R_{D_1}[h,y]$.

It is assumed that: (i) $f$ is a fixed pre-trained feature map and, therefore, $\mathcal{H} = \left\{g \circ f \big\vert g \in \mathcal{H}_2\right\}$ for some hypothesis class $\mathcal{H}_2$; and (ii) $y$ is idempotent, i.e, $y \circ y \equiv y$. For example, in~\cite{02200}, $f$ is the DeepFace representation~\cite{deepface} and $y$ maps face images to emoji caricatures. In addition, applying $y$ on an emoji gives the same emoji. 
Note that according to the terminology of~\cite{02200}, $D_1$ and $D_2$ are the source and target distributions respectively. However, the loss $R_{D_1}[h,y]$ is measured over $D_1$, while in domain adaptation, it is measured over the target distribution.

Recently~\cite{gw2017}, the cross domain transfer problem was analyzed using the theoretical term of discrepancy. Denoting, for example, $y \circ D$ to be the distribution of the $y$ mappings of samples $x\sim D$, then the following bound is obtained. 
\begin{theorem}[Domain transfer~\cite{gw2017}]\label{thm:main3}
If $\ell$ satisfies the triangle inequality\footnote{For all $y_1,y_2,y_3\in \mathcal{Y}$ it holds that $\ell(y_1,y_3) \leq  \ell(y_1,y_2)+\ell(y_2,y_3)$. This holds for the absolute loss, and can be relaxed to the square loss, where it holds up to a multiplicative factor of 3.} and $\mathcal{H}_2$ (the hypothesis class of $g$) is a universal Lipschitz hypothesis class\footnote{A function $c\in \mathcal{C}$ is Lipschitz with respect to $\ell$, if there is a constant $L>0$ such that: $\forall a_1, a_2 \in A: \ell(c(a_1),c(a_2)) \leq L \cdot \ell(a_1,a_2) $. A hypothesis class $\mathcal{C}$ is universal Lipschitz with respect to $\ell$ if all functions $c\in\mathcal{C}$ are Lipschitz with some universal constant $L>0$. This holds, for example, for neural networks with leaky ReLU activations and weight matrices of bounded norms, under the squared or absolute loss.}, then for all $h=g\circ f \in \mathcal{H}$,
\begin{small}
\begin{equation}
\begin{aligned}
R_{D_1}[h,y] \leq &
R_{y\circ D_2}[h,\Id] +R_{D_1}[f \circ h,f]\\
&+ \disc_{\mathcal{H}}(y \circ D_2, h \circ D_1)+\lambda 
\end{aligned}
\end{equation}
\end{small}
Here, $\lambda = \min_{h\in \mathcal{H}} \left\{R_{y\circ D_2}[h,\Id] + R_{D_1}[h,y]\right\}$ and $h^*=g^*\circ f$ is the corresponding minimizer.
\end{theorem}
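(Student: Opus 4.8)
The plan is to treat the minimizer $h^{*}=g^{*}\circ f$ of $\lambda$ as an intermediate hypothesis and reduce the whole inequality to an estimate on $R_{D_1}[h,h^{*}]$. Because $\ell$ satisfies the triangle inequality, so does $R_{D}[\cdot,\cdot]$ (take the expectation under $x\sim D$ of the pointwise inequality), and hence $R_{D_1}[h,y]\le R_{D_1}[h,h^{*}]+R_{D_1}[h^{*},y]$. The second summand is left untouched; at the very end it will merge with a second $h^{*}$-term produced along the way to form exactly $\lambda=R_{y\circ D_2}[h^{*},\Id]+R_{D_1}[h^{*},y]$.

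To bound $R_{D_1}[h,h^{*}]$ I would insert the self-compositions $h\circ h$ and $h^{*}\circ h$ and use the triangle inequality twice, obtaining $R_{D_1}[h,h^{*}]\le R_{D_1}[h,h\circ h]+R_{h\circ D_1}[h,h^{*}]+R_{D_1}[h^{*}\circ h,h^{*}]$, where in the middle term the substitution $z=h(x)$, $x\sim D_1$, rewrites $\mathbb{E}_{x\sim D_1}[\ell(h(h(x)),h^{*}(h(x)))]$ as $R_{h\circ D_1}[h,h^{*}]$. The two outer terms are where the feature map and the universal Lipschitz assumption do their work: writing $h=g\circ f$, $h^{*}=g^{*}\circ f$, Lipschitzness of $g,g^{*}\in\mathcal{H}_{2}$ gives $\ell(g(f(h(x))),g(f(x)))\le L\,\ell(f(h(x)),f(x))$, and likewise for $g^{*}$, so each outer term is at most $L\cdot R_{D_1}[f\circ h,f]$ --- together they reproduce the feature-consistency term of the statement up to the Lipschitz constant. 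This is precisely the step that justifies the ``recover the analog image first'' route: the computable feature-consistency quantity $R_{D_1}[f\circ h,f]$ controls an image-space quantity only because $g$ is Lipschitz.

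It remains to handle $R_{h\circ D_1}[h,h^{*}]$, and here the discrepancy enters. Since $h,h^{*}\in\mathcal{H}$, Definition~\ref{def:discrepancy} gives $R_{h\circ D_1}[h,h^{*}]\le R_{y\circ D_2}[h,h^{*}]+\disc_{\mathcal{H}}(y\circ D_2,h\circ D_1)$, and one more triangle inequality on $y\circ D_2$ splits $R_{y\circ D_2}[h,h^{*}]\le R_{y\circ D_2}[h,\Id]+R_{y\circ D_2}[h^{*},\Id]$; note that $\Id$ appears only restricted to the image of $y$, so we never need $\Id\in\mathcal{H}$. Collecting all the pieces, the outer terms give the feature-consistency term, the middle contributes $\disc_{\mathcal{H}}(y\circ D_2,h\circ D_1)$ plus $R_{y\circ D_2}[h,\Id]$ (the first term of the bound) plus a leftover $R_{y\circ D_2}[h^{*},\Id]$, and this leftover together with the untouched $R_{D_1}[h^{*},y]$ is exactly $\lambda$ by the definition of $h^{*}$ as the minimizer --- idempotency of $y$ and the fact that $f$ is a fixed pre-trained map are what keep this grouping consistent.

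The genuinely routine parts are the two triangle-inequality splits and the one-line Lipschitz estimate. The part that needs care, and what I would call the main obstacle, is the bookkeeping: the intermediate hypotheses must be chosen so that exactly $R_{D_1}[h^{*},y]$ and exactly $R_{y\circ D_2}[h^{*},\Id]$ are generated and no $h^{*}$-quantity is double counted, so that they fuse into a single $\lambda$ while everything else stays either $h$-dependent or a discrepancy. A secondary check is that all compositions type-check, i.e.\ that $\mathcal{Y}\subseteq\mathcal{X}$ so that $h\circ h$ and $h^{*}\circ h$ are defined; in the target applications both the inputs and the generated objects are images, so this is automatic.
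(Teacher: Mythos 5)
Your proposal is correct and follows essentially the same argument as the source from which this theorem is imported (the paper itself states Thm.~\ref{thm:main3} without proof, citing~\cite{gw2017}): triangle inequality through the minimizer $h^*$, insertion of the self-compositions $h\circ h$ and $h^*\circ h$, Lipschitz control of the two outer terms by $R_{D_1}[f\circ h,f]$, the discrepancy swap from $h\circ D_1$ to $y\circ D_2$, and the final split against $\Id$, with the leftover $h^*$-terms fusing into $\lambda$. Two minor remarks: your route yields the $f$-constancy term with a factor of up to $2L$, so the inequality as printed holds only after absorbing the universal Lipschitz constant (a point you correctly flag), and the idempotency of $y$ is not actually used anywhere in the derivation --- it only explains why $R_{y\circ D_2}[h,\Id]$ and $\lambda$ can be expected to be small in the intended applications.
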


This theorem matches the method of~\cite{02200}, which is called DTN. It bounds the risk $R_{D_1}[h,y]$, i.e., the expected loss (using $\ell$) between the mappings by the ground truth function $y$ and the mapping by the learned function $h$ for samples $x\sim \mathcal D_1$. The first term in the R.H.S $R_{y\circ D_2}[h,\Id]$ is the $L_{\text{TID}}$ part of the DTN loss, which, for the emoji generation application, states that emoji caricatures are mapped to themselves. The second term $R_{D_1}[f \circ h,f]$ corresponds to the $L_{\text{CONST}}$ term of DTN, which states that the DeepFace representations of the input face image and the resulting caricature are similar. The theorem shows that his constancy does not need to be assumed and is a result of the idempotency of $y$ and the structure of $h$. The third term $\disc_{\mathcal{H}}(y\circ D_2, h \circ D_1)$ is the  GAN element of the DTN method, which compares generated caricatures ($h \circ D_1$) to the training dataset of the unlabeled emoji ($y\circ D_2$). Lastly, the $\lambda$ factor captures the complexity of the hypothesis class $\cal H$, which depends on the chosen architecture of the neural network that instantiates $g$. A similar factor in the generalization bound of the unsupervised domain adaptation problem is presented in~\cite{DBLP:journals/ml/Ben-DavidBCKPV10}.  

\subsection{The Tied Output Synthesis Problem}

The problem studied in this paper, is a third flavor of domain shift, which can be seen as a mix of the two problems: unsupervised domain adaptation and the cross domain transfer problem. Similar to the unsupervised domain transfer problem, we are given a set of supervised labeled samples. The samples $c_j$ are drawn i.i.d from some distribution $D_2$ in the space $\mathcal Y_2$ and are given together with their mappings $e(c_j) \in \mathcal Y_1$. In addition, and similar to the cross domain transfer problem, we are given samples $x_i\in \mathcal X$ drawn i.i.d from another distribution $D_1$. The goal is to learn a mapping $y:\mathcal X \rightarrow \mathcal Y_2$ that satisfies the following condition $y \circ e \circ y = y$. The hypothesis class contains functions $h$ of the form $c \circ g \circ f$ for some known $f$ for $g \in \mathcal H_2$ and for $c \in \mathcal H_3$. $f$ is a pre-learned function that maps the input sample in $\mathcal X$ to some feature space, $g$ maps from this feature space to the space $\mathcal Y_1$, and $c$ maps from this space to the space of parameters $\mathcal Y_2$, see Fig.~\ref{fig:illustration}(c) and Fig.~\ref{fig:TOSfig}.

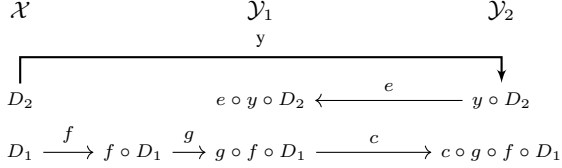
\begin{figure}[t]
\centering
\begin{tikzpicture}[auto,scale=1]
	\node (lab1) {\begin{small}$\mathcal X$\end{small}};
    \node (lab11) at ([shift={(3.2,0)}] lab1) {\begin{small}$\mathcal Y_1$\end{small}};
    \node (lab12) at ([shift={(6.4,0)}] lab1) {\begin{small}$\mathcal Y_2$\end{small}};

	\node (in1) at ([shift={(0,-1.2)}] lab1) {\begin{scriptsize}$D_2$\end{scriptsize}};
    \node (out11) at ([shift={(3.2,0)}] in1) {\begin{scriptsize}$e\circ y\circ D_2$\end{scriptsize}};
    \node (out12) at ([shift={(6.4,0)}] in1) {\begin{scriptsize}$y \circ D_2$\end{scriptsize}};
    
    \node (in2) at ([shift={(0,-.690)}] in1) {\begin{scriptsize}$D_1$\end{scriptsize}};
    \node (rep1) at ([shift={(1.5,-.690)}] in1) {\begin{scriptsize}$f \circ D_1$\end{scriptsize}};
    \node (out21) at ([shift={(3.2,-.690)}] in1) {\begin{scriptsize}$g \circ f \circ D_1$\end{scriptsize}};
    \node (out22) at ([shift={(6.4,-.690)}] in1) {\begin{scriptsize}$c \circ g \circ f \circ D_1$\end{scriptsize}};
    
    \draw[<-] (out11) -- (out12) node[midway] {\begin{scriptsize}$e$\end{scriptsize}};

    \draw[->] (in2) -- (rep1) node[midway] {\begin{scriptsize}$f$\end{scriptsize}};
    \draw[->] (rep1) -- (out21) node[midway] {\begin{scriptsize}$g$\end{scriptsize}};
    
    \draw[->] (out21) -- (out22) node[midway] {\begin{scriptsize}$c$\end{scriptsize}};
    
    \path [l] (in1.north) -- ++(0,0.35)  -- node[pos=0.5] {\begin{scriptsize}y\end{scriptsize}} ++(6.4,0) --  (out12.north);
    
\end{tikzpicture}
\caption{\label{fig:TOSfig} Tied Output Synthesis. The unknown function $y$ is learned by the approximation $h = c\circ g \circ f$. $f$ and $e$ are given. $D_1$ is the distribution of input images at test time. During training, we observe tied mappings $(y(x),e(y(x)))$ for unknown samples $x\sim D_2$ as well unlabeled samples from the other distribution $D_1$.}
\end{figure}

Our approach assumes that $e$ is prelearned from the matching samples $(c_j,e(c_j))$. However, $c$ is learned together with $g$. This makes sense, since while $e$ is a feedforward transformation from a set of parameters to an output, $c$ requires the conversion of an input of the form $g(f(x))$ where $x\sim D_1$, which is different from the image of $e$ for inputs in $\mathcal Y_2$. The theorem below describes our solution.
\begin{theorem}[Tied output bound]\label{thm:main4}
If $\ell$ satisfies the triangle inequality and $\mathcal{H}_2$ is a universal Lipschitz hypothesis class with respect to $\ell$, then for all $h=c\circ g\circ f \in \mathcal{H}$,
\begin{small}
\begin{equation}
\begin{aligned}
R_{D_1}[e\circ h,e\circ y] \leq &
R_{D_1}[e \circ h,g \circ f] + R_{e\circ y\circ D_2}[g \circ f,\Id] \\
& +R_{D_1}[f \circ g \circ f,f]\\
&+ \disc_{\mathcal{H}}(e \circ y \circ D_2, g \circ f \circ D_1)+\lambda, 
\end{aligned}
\end{equation}
\end{small}
where $\lambda = \min_{g \in \mathcal{H}_2} \left\{R_{e\circ y\circ D_2}[g \circ f,\Id] + R_{D_1}[g \circ f,e\circ y]\right\}$ and $g^*$ is the corresponding minimizer.
\end{theorem}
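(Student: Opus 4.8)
The plan is to recognize the statement as an instance of Theorem~\ref{thm:main3} (domain transfer) for a suitably chosen idempotent target, and then to absorb the remaining discrepancy between $e\circ h$ and $g\circ f$ with one use of the triangle inequality for $\ell$. The first observation is that $e\circ y$ is itself idempotent: the defining constraint of the tied-output-synthesis problem is $y\circ e\circ y=y$, and composing on the left with $e$ gives $e\circ y\circ e\circ y=e\circ y$. Thus $\tilde y:=e\circ y$ is a legitimate idempotent target function, and the composite $g\circ f$ with $g\in\mathcal{H}_2$ and $f$ a fixed feature map has exactly the structure $\mathcal{H}=\{g\circ f\mid g\in\mathcal{H}_2\}$ that Theorem~\ref{thm:main3} assumes.

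I would then instantiate Theorem~\ref{thm:main3} with input distribution $D_1$, second distribution $D_2$, idempotent target $\tilde y=e\circ y$, and the learned map $g\circ f$ in place of $h$. Since $\ell$ satisfies the triangle inequality and $\mathcal{H}_2$ is a universal Lipschitz class, the hypotheses of Theorem~\ref{thm:main3} are met, and it yields
\[
R_{D_1}[g\circ f,e\circ y]\le R_{e\circ y\circ D_2}[g\circ f,\Id]+R_{D_1}[f\circ g\circ f,f]+\disc_{\mathcal{H}}(e\circ y\circ D_2,g\circ f\circ D_1)+\lambda,
\]
where the $\lambda$ produced is exactly $\min_{g\in\mathcal{H}_2}\{R_{e\circ y\circ D_2}[g\circ f,\Id]+R_{D_1}[g\circ f,e\circ y]\}$ from the statement. (One identifies $\mathcal{Y}_1$ with a subset of $\mathcal{X}$, as in the concrete instantiations, so that $f$ may be applied to outputs in $\mathcal{Y}_1$ and the discrepancy of distributions over $\mathcal{Y}_1$ is well defined.)

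To finish, note that $e\circ h$, $g\circ f$ and $e\circ y$ are all maps $\mathcal{X}\to\mathcal{Y}_1$, so applying the triangle inequality for $\ell$ pointwise and taking the expectation over $x\sim D_1$ gives $R_{D_1}[e\circ h,e\circ y]\le R_{D_1}[e\circ h,g\circ f]+R_{D_1}[g\circ f,e\circ y]$; substituting the previous display into the last summand gives precisely the claimed bound. The substantive content is the idempotency transfer $y\mapsto e\circ y$ plus the black-box invocation of Theorem~\ref{thm:main3}; I expect the only delicate point to be the domain/codomain bookkeeping — checking that the auxiliary instance is genuinely a cross-domain-transfer instance (that $f$ and the members of $\mathcal{H}$ act on $\mathcal{Y}_1$, and that the discrepancy function class matches the one in the statement), and that the two $\lambda$'s coincide term for term.
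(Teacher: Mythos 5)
Your proposal is correct and follows essentially the same route as the paper's proof: a single triangle inequality giving $R_{D_1}[e\circ h,e\circ y]\le R_{D_1}[e\circ h,g\circ f]+R_{D_1}[g\circ f,e\circ y]$, followed by invoking Theorem~\ref{thm:main3} with target $e\circ y$ to bound the second term. Your explicit check that $e\circ y$ is idempotent (from $y\circ e\circ y=y$) is a detail the paper leaves implicit, but it is the same argument.
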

\begin{proof}
By the triangle inequality, we obtain:\\
$R_{D_1}[e\circ h,e\circ y] \leq 
R_{D_1}[e \circ h,g \circ f] + R_{D_1}[g \circ f,e \circ y]$. 

Applying Thm.~\ref{thm:main3} completes the proof:
\begin{small}
\begin{equation*}
\begin{aligned}
R_{D_1}[g\circ f,e\circ y] \leq &
R_{e\circ y\circ D_2}[g \circ f,\Id] +R_{D_1}[f \circ g \circ f,f]\\
&+ \disc_{\mathcal{H}}(e \circ y \circ D_2, g \circ f \circ D_1)+\lambda \qedhere
\end{aligned}
\end{equation*} 
\end{small}
\end{proof}

Thm.~\ref{thm:main4} presents a recursive connection between the tied output synthesis problem and the cross domain transfer problem. This relation can be generalized for tying even more outputs to even more complex relations among parts of the training data. The importance of having a generalization bound to guide our solution stems from the plausibility of many other terms such as $R_{e \circ y \circ D_2}[e \circ h,g \circ f]$ or $R_{D_1}[f \circ g \circ f,f \circ e \circ h]$.

\noindent{\bf Comparing to Unsupervised Cross Domain Transfer} The tied output problem is a specific case of cross domain transfer with $\mathcal Y$ of the latter being $\mathcal Y_1 \times \mathcal Y_2$ of the former. However, this view makes no use of the network $e$. Comparing Thm.~\ref{thm:main3} and Tmm.~\ref{thm:main4}, there is an additional term in the second bound: $R_{D_1}[e \circ h,g \circ f]$. It expresses the expected loss (over samples from $D_1$) when comparing the result of applying the full cycle of encoding by $f$, generating an image by $g$, estimating the parameters in the space $\mathcal Y_2$ using $c$, and synthesizing the image that corresponds to these parameters using $e$, to the result of applying the subprocess that includes only $f$ and $g$. 

\noindent{\bf Comparing to Unsupervised  Domain Adaptation} Consider the domain $\mathcal X \cup \mathcal Y_1$ and learn the function $e^{-1}$ from this domain to $\mathcal Y_2$, using the samples  $\{(e(c_j),c_j) | c_j\sim D_2\}$, adapted to $x_i~\sim D_1$. This is a domain adaptation problem with $D_S = e \circ D_2$ and $D_T = D_1$. Our experiments show that applying this reduction leads to suboptimal results. This is expected, since this approach does not make use of the prelearned feature map $f$. This feature map is not to be confused with the feature network learned in~\cite{Ganin:2016:DTN:2946645.2946704}, which we denote by $p$. The latter is meant to eliminate the differences between $p \circ D_S$ and $p \circ D_T$. However, the prelearned $f$ leads to easily distinguishable $f \circ D_S$ and $f \circ D_T$. 

The unsupervised domain adaptation and the TOS problem become more similar, if one identifies $p$ with the conditional function that applies $g \circ f$ to samples from $\mathcal X$ and the identity to samples from $\mathcal Y_1$. In this case, the label predictor of~\cite{Ganin:2016:DTN:2946645.2946704} is identified with our $c$ and the discrepancy terms (i.e., the GANs) are applied to the same pairs of distributions. However, the two solutions would still differ since (i) our solution minimizes $R_{D_1}[e \circ h,g \circ f]$, while in unsupervised domain adaptation, the analog term is minimized over $D_S = e\circ D_2$ and (ii) the additional non-discrepancy terms would not have analogs in the domain adaptation bounds.

\section{The Tied Output Synthesis Network} 
We next reformulate the problem as a neural network challenge. For clarity, this formulation is purposefully written to be independent of the mathematical presentation above. We study the problem of projecting an image in one domain to an image in another domain, in which the images follow a set of specifications. Given a domain, $\mathcal X$, a mapping $e$ and a function $f$, we would like to learn a generative function $G$ such that $f$ is invariant under $G$, i.e., $f\circ G = f$,  and that for all samples $x\in \mathcal X$, there exists a configuration $u \in \mathcal Y_2$ such that $G(x)=e(u)$.  Other than the functions $f$ and $e$, the training data is unsupervised and consists of a set of samples from the source domain $\mathcal X$ and a second set from the target domain of $e$, which we call $\mathcal Y_1$.

In comparison to the domain transfer method presented in~\cite{02200}, the domain $\mathcal Y_1$ is constrained to be the image of a mapping $e$. 
DTN cannot satisfy this requirement, since presenting it with a training set $\mathbf t$ of samples generated by $e$ is not a strong enough constraint. Furthermore, the real-world avataring applications require the recovery of the configuration $u$ itself, which allows the synthesis of novel samples using an extended engine $e^*$ that generates new poses, expressions in the case of face images, etc. 
\subsection{The interplay between the trained networks}

In a general view of GANs, assume a loss function $\ell(G,d,x)$, for some function $d$ that receives inputs in the domain $\mathcal Y_1$. $G$, which maps an input $x$ to entities in $\mathcal Y_1$, minimizes the following loss: 
${L_\text{GAN}} = \max_d -\E_x \ell(G,d,x)$.
This optimization is successful, if {\bf for every} function $d$, the expectation of $\ell(G,d,x)$ is small for the learned $G$. It is done by maximizing this expectation with respect to $d$, and minimizing it with respect to $G$. The two learned networks $d$ and $G$ provide a training signal to each other.

Two networks can also provide a mutual signal by collaborating on a shared task. Consider the case in which $G$ and a second function $c$ work hand-in-hand in order to minimize the expectation of some other loss $\ell(G,c,x)$. In this case, $G$ ``relies'' on $c$ and minimizes the following expression:
\begin{equation}
\label{eq:collab}
L_{c} = \min_c \E_x \ell(G,c,x).
\end{equation}
This optimization succeeds if {\bf there exists} a function $c$ for which, post-learning, the expectation $\E_x \ell(G,c,x)$ is small. 

In the problem of tied output synthesis, the function $e$ maps entities $u$ in some configuration space $\mathcal Y_2$ to the target space $\mathcal Y_1$. $c$ maps samples from $\mathcal Y_1$ to the configuration space, essentially inverting $e$. The suitable loss is:
\begin{equation}
\label{eq:Le}
{\ell}_e(G,c,x) = \|G(x)-e(c(G(x)))\|^2.
\end{equation}

For such a problem, the optimal $c$ is given by 
$c^*(z) = \argmin_u \|z-e(u)\|^2$. 
This implicit function is intractable to compute, and $c$ is learned instead as a deep neural network.

\subsection{The complete network solution}
\label{sec:gcn}

The learning algorithm is given, in addition to two mappings $e$ and $f$, a training set $\mathbf s \subset \mathcal X$, and a training set $\mathbf t \subset \mathcal Y_1$. Similar to~\cite{02200}, we define $G$ to be composed out of $f$ and a second function $g$ that maps from the output space of $f$ to $T$, i.e., $G = g \circ f$. The $e$ compliance term ($L_{c}$ of Eq.~\ref{eq:collab} using $\ell_e$ of Eq.~\ref{eq:Le}) becomes:
\begin{equation}
\label{eq:cuger}
L_{c} = \sum_{x \in \mathbf s}\|g(f(x))-e(c(g(f(x))))\|^2
\end{equation}
In addition, we minimize $L_\text{CONST}$, which advocates that for every input $x \in \mathbf s$, $f$ remains unchanged as $G$ maps it to $\mathcal Y_1$:
\begin{equation}
\label{eq:mzconst}
L_{\text{CONST}} = \sum_{x \in \mathbf s} \|f(x)-f(G(x))\|^2
\end{equation}
\begin{figure}[t]
\includegraphics[width=\linewidth]{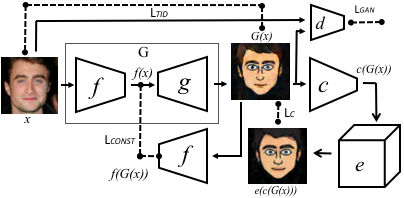}
\caption{\label{fig:DTfig} The training constraints of the Tied Output Synthesis method. The learned functions are $c$, $d$, and $G = g \circ f$, for a given $f$. The mapping $e$ is assumed to be known a-priori. Dashed lines denote loss terms.}
\end{figure}
A GAN term is added to ensure that the samples generated by $G$ are indistinguishable from the set $\mathbf t$. The GAN employs a binary classifier network $d$, and makes use of the training set $\mathbf t$.  Specifically, the following form of $\ell$ is used in $L_\text{GAN}$: 
\begin{equation}
\label{eq:mzgan}
\ell(G,d,x) = \log [1-d(G(x))] + \frac{1}{|\mathbf t |}\sum_{x' \in \mathbf t} \log [d(x')]. 
\end{equation}
Like~\cite{02200}, the following term encourages $G$ to be the identity mapping for samples from $\mathbf t$. 
\begin{equation}
\label{eq:tidz}
L_{\text{TID}} = \sum_{x \in \mathbf t} \|x-g(f(x))\|^2
\end{equation}
Taken together, $d$ maximizes $L_{GAN}$, and both $g$ and $c$ minimize
$L_{c} + \alpha L_\text{GAN} + \beta L_{\text{CONST}} + \gamma L_{\text{TID}} + \delta L_{\text{TV}}$ 
for some non-negative weights $\alpha,\beta,\gamma,\delta$, where $L_{\text{TV}}$, is the total variation loss, which smooths the resulting image $z=[z_{ij}]=G(x)$: 
$L_{TV}(z) =
 \sum_{i,j}
 \left(
 \left(z_{i,j+1} - z_{ij}\right)^2 +
 \left(z_{i+1,j} - z_{ij}\right)^2
 \right)^\frac{1}{2}$. The method is illustrated in Fig.~\ref{fig:DTfig} and laid out in Alg.~\ref{alg:euclid}.

\begin{algorithm}[t]
  \caption{The TOS training algorithm.}\label{alg:euclid}
  \begin{algorithmic}[1]
  \State {Given the function $e:\mathcal Y_2\rightarrow \mathcal Y_1$, an embedding function $f$, and $\mathbf S\subset \mathcal X$, $\mathbf T \subset \mathcal Y_1$ training sets.}
  \State {Initialize networks $c$, $g$ and $d$ }
  \algrenewcommand\algorithmicindent{0.80em}%
  \While{iter $<$ numiters}	
  \State{Sample mini-batches $\mathbf s \subset \mathbf S$, $\mathbf t \subset \mathbf T$}
  \State{Compute feed-forward $d(t)$, $d(g(f(s)))$ }
  \State{Update $d$ by minimizing $\ell(G,d,x)$ for $x\in \mathbf s$\Comment{Eq.~\ref{eq:mzgan}}}
  \State{Update $g$ by maximizing $\ell(G,d,x)$ for $x\in \mathbf s$\Comment{Eq.~\ref{eq:mzgan}}}
  \State{Update $g$ by minimizing $L_{\text{TID}}$ \Comment{Eq.~\ref{eq:tidz}} }
  \State{Update $g$ by minimizing $L_{\text{CONST}}$ \Comment{Eq.~\ref{eq:mzconst}} }
  \State{Update $g$ by minimizing $L_{\text{TV}}$}
  \State{Compute $e(c(z))$ by feed-forwarding $z:=g(f(s))$}
  \State{Update $c$ and $g$ by minimizing $L_{\text{c}} $ \Comment{Eq.~\ref{eq:cuger}} }  
  \EndWhile	
  \end{algorithmic}
\end{algorithm}
\FloatBarrier

In the context of Thm.~\ref{thm:main4}, the term $L_c$ corresponds to the risk term $R_{D_1}[e \circ h,g \circ f]$ in the theorem and compares samples transformed by the mapping $g\circ f$ to the mapping of the same samples to a configuration in $\mathcal Y_2$ using $c\circ g\circ f$ and then to $\mathcal Y_1$ using $e$. The term $L_\text{TID}$ corresponds to the risk $R_{e\circ y\circ D_2}[g \circ f,\Id]$, which is the expected loss over the distribution from which $\mathbf t$ is sampled, when comparing the samples in this training set to the result of mapping these by $g\circ f$. The discrepancy term $\disc_{\mathcal{H}}(e \circ y \circ D_2, g \circ f \circ D_1)$ matches the $L_\text{GAN}$ term, which as explained above, measures a distance between two distributions, in this case, $e \circ y \circ D_2$, which is the distribution from which the training set $\mathbf t$ is taken, and the distribution of mappings by $g\circ f$ of the samples $\mathbf s$ which are drawn from $D_1$.

\section{Experiments}

The Tied Output Synthesis (TOS) method is evaluated on a toy problem of inverting a polygon synthesizing engine and on avatar generation from a photograph for two different CG engines. The first problem is presented as a mere illustration of the method, while the second is an unsolved real-world challenge.

\subsection{Polygons}

The first experiment studies TOS in a context that is independent of $f$ constancy. Given a set of images $\mathbf t \in \mathcal Y_1$, and a mapping $e$ from some vector space to $\mathcal Y_1$, learn a mapping $c$ and a generative function $G$ that creates random images in $\mathcal Y_1$ that are $e$-compliant (Eq.~\ref{eq:Le}).

We create binary $64\times 64$ images of regular polygons by sampling uniformly three parameters: the number of vertices (3-6), the radius of the enclosing circle (15-30), and a rotation angle in the range $[-10,10]$. Some polygons are shown in Fig.~\ref{fig:poly}(a). 10,000 training images were created and used in order to train a CNN $e$ that maps the three parameters to the output, with very little loss (MSE of 0.1). 

A training set $\mathbf t$ of a similar size is collected by sampling in the same way. As a baseline method, we employ DCGAN~\cite{dcgan}, in which the generator function $G$ has four deconvolution layers (the open code of~\url{https://github.com/soumith/dcgan.torch} is used), and in which the input $x$ is a random vector in $[-1,1]^{100}$. The results are shown in Fig.~\ref{fig:poly}(b). While the generated images are similar to the class of generated polygons, they are not from this class and contain visible artifacts such as curved edges.

A TOS is then trained by minimizing Eq.~\ref{eq:Le} with the additional GAN constraints. The optimization minimizes $L_{c} + \alpha L_\text{GAN}$, for $\alpha=1$ ($L_{CONST}$ and $L_{TID}$ are irrelevant to this experiment), and with the input distribution $D_1$ of random vectors sampled uniformly in the $[-1,1]$ hypercube in 100D. The results, as depicted in Fig.~\ref{fig:poly}(c), show that TOS, which enjoys the additional supervision of $e$, produces results that better fit the polygon class.

\begin{figure}
\centering
\begin{tabular}{c@{~}c}
\raisebox{.5cm}{(a)} & \includegraphics[trim=0 68 400 0, clip, width=.92799\linewidth]{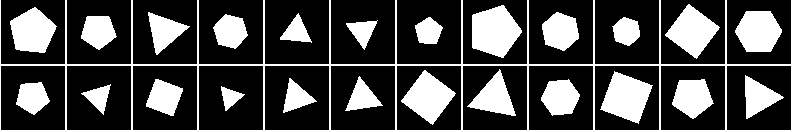}\\
\raisebox{.5cm}{(b)} & \includegraphics[trim=0 68 400 0, clip, width=.92799\linewidth]{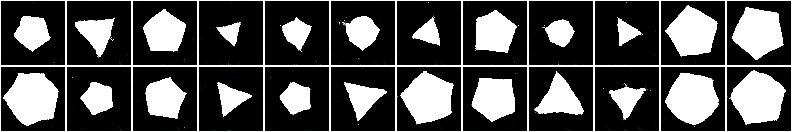}\\
\raisebox{.5cm}{(c)} & \includegraphics[trim=0 68 400 0, clip, width=.92799\linewidth]{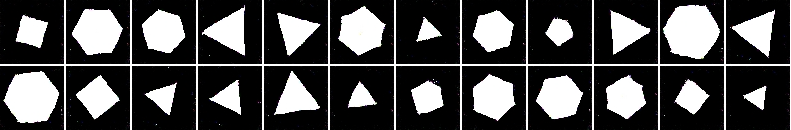}\\
\end{tabular}
\caption{\label{fig:poly} Toy problem. (a) Polygon images with three random parameters: number of vertices, radius of enclosing circle and rotation. (b) GAN generated images mimicking the class of polygon images. (c) Images created by TOS. The TOS is able to benefit from the synthesis engine $e$ and produces images that are noticeably more compliant than the GAN.}
\end{figure}

\subsection{Face Emoji}
\label{sec:faceemoji}
The proposed TOS method is evaluated for the task of generating specification-compliant emoji. In this task, we transfer an ``in-the-wild'' facial photograph to a set of parameters that defines an emoji. As the unlabeled training data of face images (domain $\mathcal X$), we use a set $\mathbf s$ of one million random images without identity information. The set $\mathbf t$ consists of assorted facial avatars ({\it emoji}) created by an online service (\url{bitmoji.com}). The emoji images were processed by an automatic process that detects, based on a set of heuristics, the center of the irises and the tip of the nose~\cite{02200}. Based on these coordinates, the emoji were centered and scaled into $152 \times 152$ RGB images.

The emoji engine of the online service is mostly additive. In order to train the TOS, we mimic it and have created a neural network $e$ that maps properties such as gender, length of hair, shape of eyes, etc. into an output image. The architecture is detailed in the appendix.

As the function $f$, we employ the representation layer of the DeepFace network~\cite{deepface}. This representation is 256-dimensional and was trained on a labeled set of four million images that does not intersect the set $\mathbf s$. Network $c$ maps a $64 \times 64 \time 3$ emoji to a configuration vector. It contains five convolutional layers, each followed by batch normalization and a leaky ReLU with a leakiness coefficient of 0.2. Network $g$ maps $f$'s representations to $64 \times 64$ RGB images. Following~\cite{02200}, this is done through a network with 9 blocks, each consisting of a convolution, batch-normalization and ReLU. The odd blocks 1,3,5,7,9 perform upscaling convolutions. The even ones perform $1 \times 1$ convolutions~\cite{nin}. Network $d$ takes $152 \times 152$ RGB images (either natural or scaled-up emoji) and consists of 6 blocks, each containing a convolution with stride 2, batch normalization, and a leaky ReLU. We set $\alpha = 0.01$, $\beta = 100$, $\gamma = 1$, $\delta = 0.0005$ as the tradeoff hyperparameters, 
after eyeballing the results of the first epoch of a very limited set of experiments.

For evaluation purposes only, we employ the benchmark of~\cite{02200}, which contains manually created emoji of $118$ random images from the CelebA dataset~\cite{celeba}. The benchmark was created by a team of professional annotators who used the web service that creates the emoji images.  Fig.~\ref{fig:IBtagB} shows side by side samples of the original image, the human generated emoji, the emoji generated by the generator function of DTN~\cite{02200}, and the emoji generated by both the generator $G=g\circ f$ and the compound generator $e\circ c \circ G$ of our TOS method.  As can be seen, the DTN emoji tend to be more informative, albeit less restrictive than the ones created manually. TOS respects the configuration space and creates emoji that are similar to the ones created by the human annotators, but which tend to carry more identity information. 

\begin{figure*}[t]
\begin{minipage}[c]{0.68\textwidth}
\begin{tabular}{c@{~~}c@{~~}c}
\includegraphics[trim=0 0 513 0, clip, height=12cm]{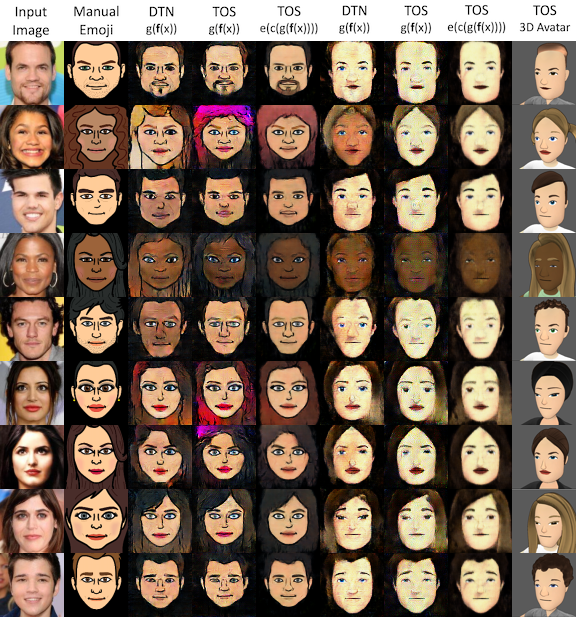}&
\includegraphics[trim=64 0 256 0, clip, height=12cm]{yaniv.png}&
\includegraphics[trim=320 0 0 0, clip, height=12cm]{yaniv.png}\\
{\small (a)} & {\small(b)} & {\small (c)}\\
\end{tabular}
  \end{minipage}\hfill
  \vspace{-.2cm}
  \begin{minipage}[c]{0.31\textwidth}
\renewcommand{\figurename}{$\leftarrow$ Figure}
\caption{\label{fig:IBtagB} Shown, side by side, are (a) sample images from the CelebA dataset. (b) emoji, from left to right: the images created manually using a web interface (for evaluation only), the result of DTN, and the two results of our TOS: $G(x)$ and then $e(c(G(x)))$. (c) VR avatar results: DTN, the two TOS results, and a 3D rendering of the resulting configuration file. See Tab.~\ref{tab:BvsG} for retrieval performance. The results of DANN~\cite{Ganin:2016:DTN:2946645.2946704} are not competitive and are shown in the appendix.}

\renewcommand{\figurename}{$\uparrow$Figure}
~\\

\noindent\begin{tabular}{c@{~}c}
\hspace{-.2cm}\includegraphics[trim=0 0 0 0, clip, width=0.492412348189\linewidth]{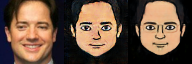}&
\hspace{-.1cm}\includegraphics[trim=0 0 0 0, clip, width=0.492412348189\linewidth]{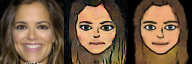}\\
\hspace{-.2cm}\includegraphics[trim=0 0 0 0, clip, width=0.492412348189\linewidth]{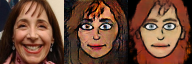}&
\hspace{-.1cm}\includegraphics[trim=0 0 0 0, clip, width=0.492412348189\linewidth]{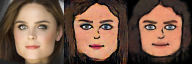}\\
\hspace{-.2cm}\includegraphics[trim=0 0 0 0, clip, width=0.492412348189\linewidth]{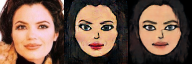}&
\hspace{-.1cm}\includegraphics[trim=0 0 0 0, clip, width=0.492412348189\linewidth]{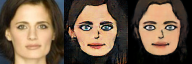}\\
\hspace{-.2cm}\includegraphics[trim=0 0 0 0, clip, width=0.492412348189\linewidth]{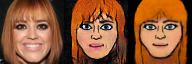}&
\hspace{-.1cm}\includegraphics[trim=0 0 0 0, clip, width=0.492412348189\linewidth]{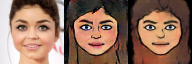}\\
\hspace{-.2cm}\includegraphics[trim=0 0 0 0, clip, width=0.492412348189\linewidth]{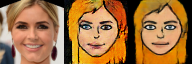}&
\hspace{-.1cm} \includegraphics[trim=0 0 0 0, clip, width=0.492412348189\linewidth]{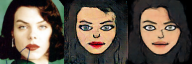}\\
\end{tabular}
\caption{\label{fig:fsc2} Multi-image results on Facescrub. Shown, side by side, are (i) the image selected to create the TOS and the DTN emoji, (ii) the DTN emoji, and (iii) the TOS emoji, obtained by $e\circ c\circ g \circ f$. See also appendix.}
\end{minipage}
\end{figure*}

\begin{table}[t]
\centering
\begin{tabular}{|l|cc|cc|}
\hline
\textbf{Method} & \multicolumn{2}{|c|}{{Emoji}}& \multicolumn{2}{c|}{Avatars}\\
& $\mathbf{g(f(x))}$ & $\mathbf{e(..(x))}$ & $\mathbf{g(f(x))}$ & $\mathbf{e(..(x))}$ \\
\hline
Manual & NA & 16,311 & NA & NA \\
DANN~\cite{Ganin:2016:DTN:2946645.2946704} & NA & 59,625 & NA & 52,435  \\
DTN~\cite{02200} & 16 & 18,079 & 195 & 38,805      \\
TOS & 30 & 3,519 & 758 & 11,153 \\
TOS fixed $\bar c$ & 26 & 14,990 & 253 & 43,160 \\
\hline
\end{tabular}
\caption{\label{tab:BvsG}Comparison of median rank for retrieval out of a set of 100,001 face images for either manually created emoji, or emoji and VR avatars created by DTN or TOS. Results are shown for the ``raw'' $G(x)$ as well as for the configuration compliant $e(..(x))$. Since DTN does not produce a configuration-compliant emoji, we obtain the results for the $e(..(x))$ column by applying to its output a pretrained network $\bar c$ that maps emoji to configurations. Also shown are DANN results obtained when training such a mapping $\bar c$ that is adapted to the samples in $\mathbf s$.}
\end{table}

In order to evaluate the identifiability of the resulting emoji, the authors of~\cite{02200} have collected a second example for each identity in the set of $118$ CelebA images and a set $\mathbf s'$ of 100,000 random face images (unsupervised, without identity), which were not included in $\mathbf s$. The VGG face CNN descriptor~\cite{Parkhi15} is then used in order to perform retrieval as follows. For each image $x$ in the manually annotated set, a gallery  $\mathbf s' \cup x'$ is created, where $x'$ is the other image of the person in $x$. Retrieval is then performed using VGG faces and either the manually created emoji, $G(x)$, or $e(c(G(x)))$ as the probe. 

In these experiments, the VGG network is used in order to avoid a bias that might be caused by using $f$ both for training the DTN and the TOS methods and for evaluation. The results are reported in Tab.~\ref{tab:BvsG}. As can be seen, the $G(x)$ emoji generated by DTN are extremely discriminative and obtain a median rank of 16 in cross-domain identification out of $10^5$ distractors. However, DTNs are not compatible with any configuration vector. In order to demonstrate this, we trained a network $\bar c$ that maps emoji images to configurations. 
When applied to the emoji generated by DTN and transforming the results, using $e$, back to an emoji, the obtained images are less identifiable than the emoji created manually (Tab.~\ref{tab:BvsG}, under $e(..(x))$). By comparison, the median rank of the emoji created by the configuration vector $c(G(x))$ of TOS is much better than the result obtained by the human annotators. As expected, DTN has more identifiable results than TOS when considering the output of $g(f(x))$ directly, since TOS has additional terms and the role of $L_\text{CONST}$ in TOS is naturally reduced.

The need to train $c$ and $G$ jointly, as is done in the TOS framework, is also verified in a second experiment, in which we fixed the network $c$ of TOS to be the pretrained network $\bar c$. 
The results of rendering the configuration vector were also not as good as those obtained by the unmodified TOS framework. As expected, querying by $G(x)$ directly, produces results that are between DTN and TOS.

It should be noted that using the pretrained $\bar c$ directly on inputs faces, leads to fixed configurations (modes), since $\bar c$ was trained to map from $\mathcal Y_1$ and not from $\mathcal X$. This is also true when performing the prediction based on $f$ mappings of the input and when training a mapping from $\mathcal X$ to $\mathcal Y_2$ under the $f$ distance on the resulting avatar. This situation calls for the use of unsupervised domain adaptation (Sec.~\ref{ref:problemformulation}) to learn a mapping from $\mathcal X$ to $\mathcal Y_2$ by adapting a mapping from $\mathcal Y_1$.  Despite some effort, applying the domain adaptation method of~\cite{Ganin:2016:DTN:2946645.2946704} did not result in satisfactory results (Tab.~\ref{tab:BvsG} and appendix). The best architecture found for this network follows the framework of domain-adversarial neural networks~\cite{Ganin:2016:DTN:2946645.2946704}. Our implementation consists of a feature network $p$ that resembles our network $c$ - with 4 convolution layers, a label predictor $l$ which consists of 3 fully connected layers, and a discriminative network $d$ that consists of 2 fully connected layers. The latter is preceded by a gradient reversal layer to ensure that the feature distributions of both domains are made similar. In both $l$ and $d$, each hidden layer is followed by batch normalization.

\noindent {\bf Human rating} Finally, we asked a group of 20 volunteers to select the better emoji, given a photo from celebA and two matching emoji: one created by the expert annotators and one created by TOS ($e\circ c\circ G$). The raters were told that they are presented with the results of two algorithms for automatically generating emoji and are requested to pick their favorable emoji for each image. The images were presented printed out, in random order, and the raters were given an unlimited amount of time. In 39.53\% of the answers, the TOS emoji was selected. This is remarkable considering that in a good portion of the celebA emoji, the TOS created very dark emoji in an unfitting manner (since $f$ is invariant to illumination and since the configuration has many more dark skin tones than lighter ones). 

TOS, therefore, not only provides more identifiable emoji, but is also very close to be on par with professional annotators. It is important to note that we did not compare to DTN in this rating, since DTN does not create a configuration vector,  which is needed for avatar applications (Fig~\ref{fig:piven}(b)).

\noindent{\bf Multiple Images Per Person} Following~\cite{02200}, we evaluate the results obtained per person and not just per image on the Facescrub dataset~\cite{facescrub}. For each person $q$, we considered the set of their images $X_q$, and selected the emoji that was most similar to their source image, i.e., the one for which:
$\argmin_{x \in X_q} || f(x)-f(e(c(G(x))))||$. The qualitative results are appealing and are shown in Fig.~\ref{fig:fsc}.

\vspace{-.1cm}
\subsection{VR Avatars}
\label{sec:socialvr}
\vspace{-.1cm}
We next apply the proposed TOS method to a commercial avatar generator engine, see Fig.~\ref{fig:IBtagB}(c). We sample random parameterizations and automatically align their frontally-rendered avatars into $64 \times 64$ RGB images to form the training set $\mathbf t$. We then train a CNN $e$ to mimic this engine and generate such images given their parameterization. Using the same architectures and configurations as in Sec.~\ref{sec:faceemoji}, including the same training set $\mathbf s$, we train $g$ and $c$ to map natural facial photographs to their engine-compliant set of parameters. We also repeat the same identification experiment and report median rankings of the analog experiments, see Tab.~\ref{tab:BvsG}(right). The 3D avatar engine is by design not as detailed as the 2D emoji one, with elements such as facial hair still missing and less part shapes available. In addition, the avatar model style is more generic and focused on real time puppeteering and not on cartooning. Therefore, the overall numbers are lower for all methods, as expected. TOS seems to be the only method that is able to produce identifiable configurations, while the other methods lead to ranking that is close to random.
\vspace{-.1cm}
\section{Conclusions}
\vspace{-.1cm}
\label{sec:conclusions}
With the advent of better computer graphics engines and the plethora of available models, and the ability of neural networks to compare cross-domain entities, the missing element for bridging between computer vision and computer graphics is the ability to link image data to a suitable parametrization. The previously presented DTN method showed a remarkable capability to create analogies without explicit supervision.  For example, highly identifiable emoji were generated. However, emoji applications call for parametrized characters, which can then be transformed by artists to other views and new expressions, and 
the emoji created by DTN cannot be converted to a configuration. The TOS method that we present is able to generate identifiable emoji that are coupled with a valid configuration vector. 

While TOS was presented in a way that requires the rendering function $e$ to be differentiable, working with black-box renderers using gradient estimation techniques is a common practice, e.g., in Reinforcement Learning.

{\small
\bibliographystyle{ieee}
\bibliography{gans}
}

\appendix
\section{Summary of Notations}

Tab.~\ref{tab:notation} itemizes the symbols used in this work. Fig.~2,3,4 of the main text illustrate many of these symbols.

\begin{table*}[t]
\centering
\begin{tabular}{|c|p{14cm}|}
\hline
Symbol & Meaning\\
\hline
\hline
$\mathcal X$ & Input space\\
$\mathcal Y$ & Output space\\
$\mathcal Y_1, \mathcal Y_2$ & Tied output spaces \\
$D_S$ & Source distribution \\
$D_T$ & Target distribution \\
$D_1$,$D_2$ & Input/output or other pairs of distributions \\
$\ell$ & A loss function, typically $\ell: \mathcal Y \times \mathcal Y \rightarrow \mathbb{R}_+$ \\
$\disc_{\mathcal{C}}(D_1,D_2)$ & Discrepancy between two distributions $D_1$ and $D_2$, using functions from the hypothesis class $\mathcal{C}$\\
$R_{D}[c_1,c_2]$ & The risk between two functions $c_1$ and $c_2$, i.e.,$\mathbb{E}_{x\sim D}\left[\ell(c_1(x),c_2(x))\right]$ \\
$y$ & The function from input to output we learn\\
$y_S, y_T$ & In domain adaptation, the source and target functions from input to output \\
$f$ & A pre-trained feature map\\
$e$ & A mapping from configurations ($\mathcal Y_2$) to parameterized outputs ($\mathcal Y_1$) \\
$c \in \mathcal H_3$ & A learned mapping from the space of parameterized outputs ($\mathcal Y_1$) to configurations ($\mathcal Y_2$) \\
$g \in \mathcal H_2$ & A generator function from the feature space (image of $f$) to the output space \\
$d$ & The discriminator of the GAN\\
$h \in \mathcal H$ & A mapping from input to output (different to each problem)\\
$G$ & A generator from the input space to the space $\mathcal Y_1$ \\
$L_\text{GAN}$ & The GAN loss term. Used together with Eq.~7 \\
$L_c$ & The loss that arises from the mismatch between $G$ and $c$. The specific form used is given in Eq.~5 \\
$\ell_e$ & For a given $x$, the mismatch between $G(x)$ and $e \circ c$ applied to it \\
$\mathbf s$ & The training set in $\mathcal X$ \\
$\mathbf t$& The training set in $\mathcal Y_1$ \\
$L_\text{CONST}$ & The term that enforces f-constancy \\
$L_\text{TID}$ & The term that enforces idempotency for the learned mapping  \\
$L_\text{TV}$ & Total Variation loss, which encourages smoothness of the output  \\
$\alpha$,$\beta$,$\gamma$,$\delta$ & Tradeoff parameters (weights in the loss term the network minimizes) \\
$p$ & The feature map of the DANN algorithm~\cite{Ganin:2016:DTN:2946645.2946704} \\
$l$ & The label predictor network of the DANN algorithm~\cite{Ganin:2016:DTN:2946645.2946704} \\
\hline
\end{tabular}
\caption{\label{tab:notation}The mathematical notations used in the paper.}
\end{table*}

\section{DANN results}

Fig.~\ref{fig:DANN} shows side by side samples of the original image and the emoji generated by the  method of~\cite{Ganin:2016:DTN:2946645.2946704}. As can be seen, these results do not preserve the identity very well, despite considerable effort invested in finding suitable architectures.  

\begin{figure*}[t]
\centering
\begin{tabular}{c@{~}c@{~}c@{~}c}
\includegraphics[trim=0 0 0 0, clip, width=0.22432932435\linewidth]{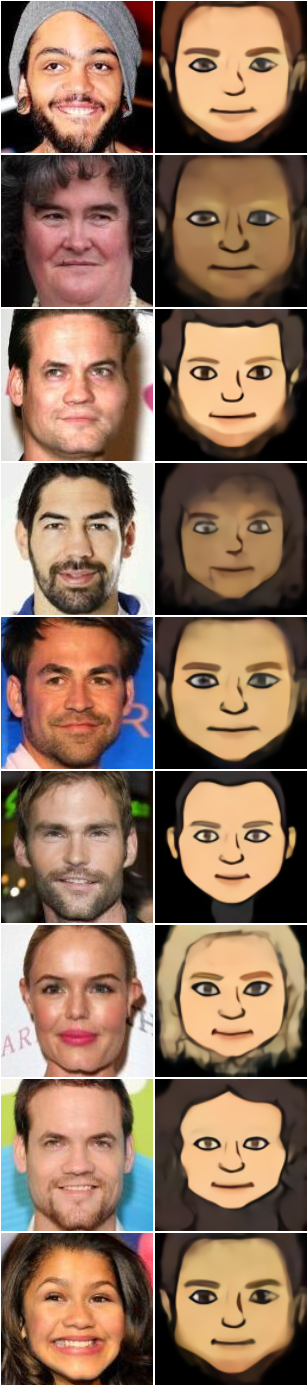}&
\includegraphics[trim=0 0 0 0, clip, width=0.22432932435\linewidth]{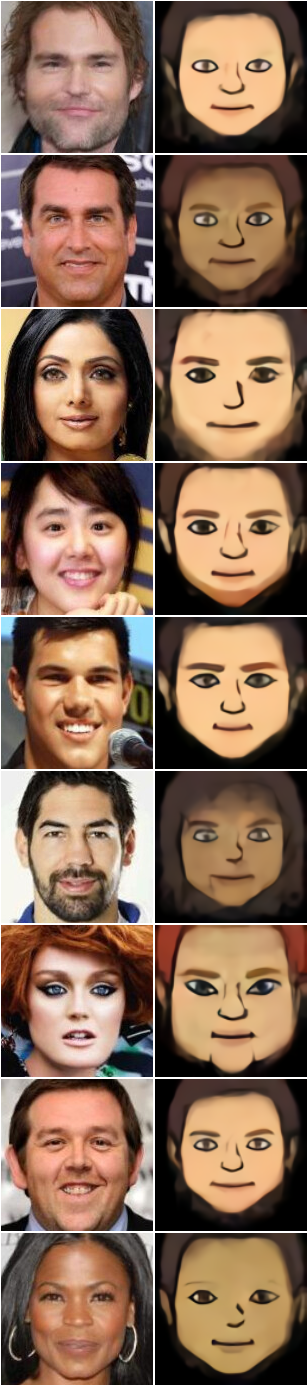}&
\includegraphics[trim=0 0 0 0, clip, width=0.22432932435\linewidth]{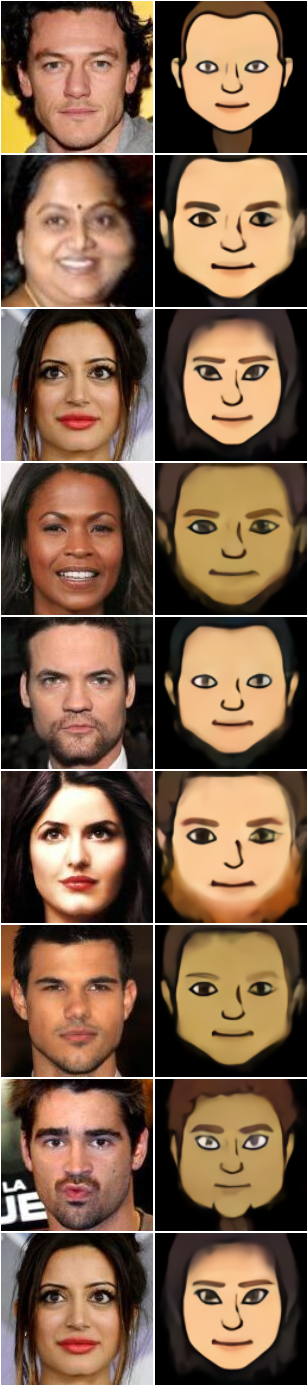}&
\includegraphics[trim=0 0 0 0, clip, width=0.22432932435\linewidth]{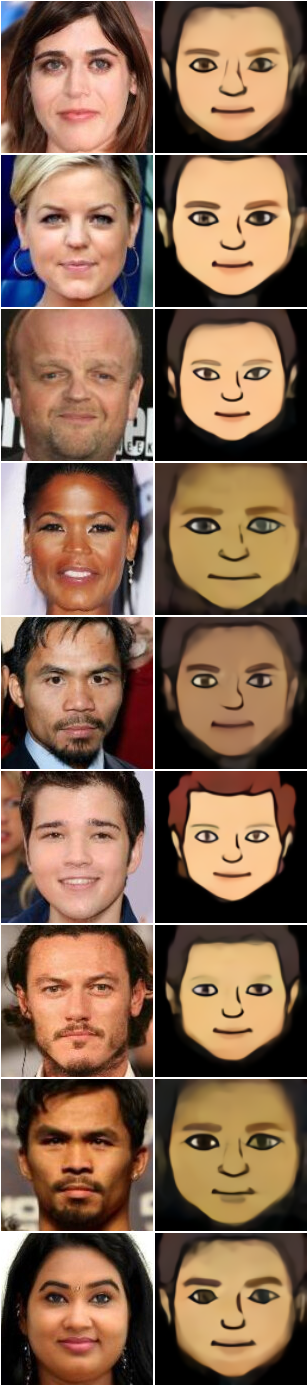}
\end{tabular}
\caption{\label{fig:DANN} Shown, side by side are sample images from the CelebA dataset and the results obtained by the DANN domain adaptation method~\cite{Ganin:2016:DTN:2946645.2946704}. These results are not competitive.}
\end{figure*}

\section{Multiple Images Per Person} 

Following~\cite{02200}, we evaluate the visual quality that is obtained per person and not just per image, by testing TOS on the Facescrub dataset~\cite{facescrub}. For each person $p$, we considered the set of their images $X_p$, and selected the emoji that was most similar to their source image, i.e., the one for which:
\begin{equation}
\label{eq:mmeq}
\argmin_{x \in X_p} || f(x)-f(e(c(G(x))))||.
\end{equation}

Fig.~\ref{fig:fsc2} depicts the results obtained by this selection method on sample images form the Facescrub dataset (it is an extension of Fig.~7 of the main text). The figure also shows, for comparison, the DTN~\cite{02200} result for the same image.

\begin{figure*}[t]
\centering
\begin{tabular}{c@{~}c@{~}c@{~}c}
\includegraphics[trim=0 0 0 0, clip, width=0.32412348189\linewidth]{facescrub/1.png}&
\includegraphics[trim=0 0 0 0, clip, width=0.32412348189\linewidth]{facescrub/101.png}&
\includegraphics[trim=0 0 0 0, clip, width=0.32412348189\linewidth]{facescrub/113.png}\\
\includegraphics[trim=0 0 0 0, clip, width=0.32412348189\linewidth]{facescrub/103.png}&
\includegraphics[trim=0 0 0 0, clip, width=0.32412348189\linewidth]{facescrub/117.png}&
\includegraphics[trim=0 0 0 0, clip, width=0.32412348189\linewidth]{facescrub/119.png}\\
\includegraphics[trim=0 0 0 0, clip, width=0.32412348189\linewidth]{facescrub/123.png}&
\includegraphics[trim=0 0 0 0, clip, width=0.32412348189\linewidth]{facescrub/125.png}&
\includegraphics[trim=0 0 0 0, clip, width=0.32412348189\linewidth]{facescrub/137.png}\\
\includegraphics[trim=0 0 0 0, clip, width=0.32412348189\linewidth]{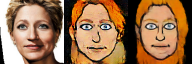}&
\includegraphics[trim=0 0 0 0, clip, width=0.32412348189\linewidth]{facescrub/143.png}&
\includegraphics[trim=0 0 0 0, clip, width=0.32412348189\linewidth]{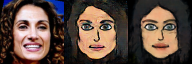}\\
\includegraphics[trim=0 0 0 0, clip, width=0.32412348189\linewidth]{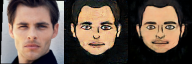}&
\includegraphics[trim=0 0 0 0, clip, width=0.32412348189\linewidth]{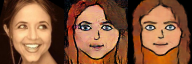}&
\includegraphics[trim=0 0 0 0, clip, width=0.32412348189\linewidth]{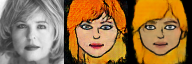}\\
\includegraphics[trim=0 0 0 0, clip, width=0.32412348189\linewidth]{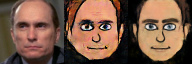}&
\includegraphics[trim=0 0 0 0, clip, width=0.32412348189\linewidth]{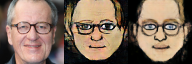}&
\includegraphics[trim=0 0 0 0, clip, width=0.32412348189\linewidth]{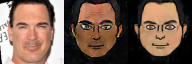}\\
\includegraphics[trim=0 0 0 0, clip, width=0.32412348189\linewidth]{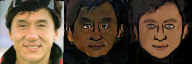}&
\includegraphics[trim=0 0 0 0, clip, width=0.32412348189\linewidth]{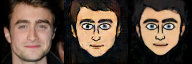}&
\includegraphics[trim=0 0 0 0, clip, width=0.32412348189\linewidth]{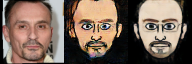}\\
\includegraphics[trim=0 0 0 0, clip, width=0.32412348189\linewidth]{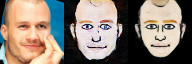}&
\includegraphics[trim=0 0 0 0, clip, width=0.32412348189\linewidth]{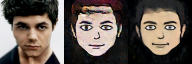}&
\includegraphics[trim=0 0 0 0, clip, width=0.32412348189\linewidth]{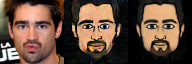}\\
\includegraphics[trim=0 0 0 0, clip, width=0.32412348189\linewidth]{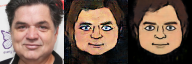}&
\includegraphics[trim=0 0 0 0, clip, width=0.32412348189\linewidth]{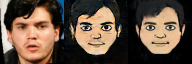}&
\includegraphics[trim=0 0 0 0, clip, width=0.32412348189\linewidth]{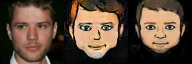}\\
\includegraphics[trim=0 0 0 0, clip, width=0.32412348189\linewidth]{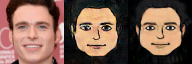}&
\includegraphics[trim=0 0 0 0, clip, width=0.32412348189\linewidth]{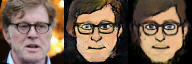}&
\includegraphics[trim=0 0 0 0, clip, width=0.32412348189\linewidth]{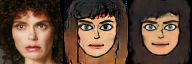}\\
\includegraphics[trim=0 0 0 0, clip, width=0.32412348189\linewidth]{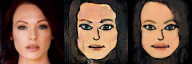}&
\includegraphics[trim=0 0 0 0, clip, width=0.32412348189\linewidth]{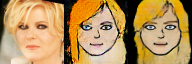}&
\includegraphics[trim=0 0 0 0, clip, width=0.32412348189\linewidth]{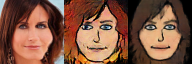}\\
\end{tabular}
\caption{\label{fig:fsc} The results obtained by the TOS method for a sample of individuals from the Facescrub dataset. Shown, side by side, are the image used to create the TOS and the DTN emoji, the DTN emoji, and the TOS emoji, obtained by $e\circ c\circ g \circ f$. The image that represents a person maximizes, out of all images for this person, $f$-constancy for the TOS method.}
\end{figure*}

\section{Detailed Architecture of the Various Networks}

In this section we describe the architectures of the networks used in for the emoji and avatar experiments. 

\subsection{TOS}

Network $g$ maps DeepFace's 256-dimensional representation~\cite{deepface} into $64 \times 64$ RGB emoji images. Following~\cite{02200}, this is done through a network with 9 blocks, each consisting of a convolution, batch-normalization and ReLU, except the last layer which employs Tanh activation. The odd blocks 1,3,5,7,9 perform upscaling convolutions with 512-256-128-64-3 filters respectively of spatial size $ 4\times 4$. The even ones perform $1 \times 1$ convolutions~\cite{nin}. The odd blocks use a stride of 2 and padding of 1, excluding the first one which does not use stride or padding. 

Network $e$ maps emoji parameterization into the matching $64 \times 64$ RGB emoji. The parameterization is given as binary vectors in $\mathbb{R}^{813}$ for emojis; Avatar parameterization is in $\mathbb{R}^{354}$. While there are dependencies among the various dimensions (an emoji cannot have two hairstyles at once), the binary representation is chosen for its simplicity and generality. $e$ is trained in a fully supervised way, using pairs of matching parameterization vectors and images in a supervised manner.

The architecture of $e$ employs five upscaling convolutions with 512-256-128-64-3 filters respectively, each of spatial size $4\times4$. All layers except the last one are batch normalized followed by a ReLU activation. The last layer is followed by Tanh activation, generating an RGB image with values in range $[-1,1]$. All the layers use a stride of 2 and padding of 1, excluding the first one which does not use stride or padding. 

Network $d$ takes $152 \times 152$ RGB images (either natural or scaled-up emoji) and outputs log-probabilities predicting if the image is fake or real. It consists of 6 blocks, each containing a convolution with stride 2, batch normalization, and a leaky ReLU with leakiness coefficient of 0.2. Each block contains 64-128-256-512-512-3 filters respectively. As before, the last layer does not employ batch normalization and ReLU.

Network $c$ maps a $64 \times 64 \time 3$ emoji to parameterization vector. It contains five convolutional layers, each followed by batch normalization and a leaky ReLU with a leakiness coefficient of 0.2. Each layer contains 64-128-256-512-813 filters respectively.  The last layer is followed by Tanh activation, generating a parameterization vector with values in range $[-1,1]$.

The networks used for the synthetic polygon experiment are somewhat simpler: $g$ has the same structure of as in the emoji experiment excluding the even convolutions i.e., it does not contain the $1 \times 1$ convolutions. The architecture of $d$ is unchanged. Finally, the architectures of $e$ and $c$ are updated to match the synthetic experiment parameterization. $e$ is changed to map a parameterization vectors in $\mathbb{R}^{3}$ to RGB images, and $c$ is trained to predict such a vector.
\subsection{DANN}

In the domain adaptation experiments, network $p$ extracts 2048-dimensional feature vectors from $64 \times 64$ RGB images. It resembles the structure of network $c$ - with 4 convolution layers. Each convolution is with 64-128-256-512 filters respectively. The last convolutional layer employs a stride of 1 instead of 2 and does not use batch-normalized or leaky ReLU. Finally, the network output is flattened to 1-dimensional feature vector.

The label prediction network $l$ accepts as input feature vectors generated by $p$ and outputs emoji parameterization vectors matching the input image. It consists of 3 fully connected layers. Each hidden layer is followed by batch-normalization and leaky ReLU activation. The last layer is followed by Tanh activation. The hidden layers contain 1024 and 512 units respectively.

The discriminator $d$ predicts the input image domain given its feature vector. It consists of two fully connected layers with 512 hidden units. The hidden layer is followed by batch normalization and leaky ReLU activations. It is preceded by a gradient reversal layer to ensure that the feature distributions of both domains are similar. The last layer is followed by Sigmoid activation, predicting the input image domain.

\end{document}